\documentclass{article}

\usepackage{microtype}
\usepackage{graphicx}
\usepackage{booktabs}
\usepackage{hyperref}

\usepackage[accepted]{eiml_icml2026}
\makeatletter
\renewcommand{\ICML@appearing}{\textit{2nd Workshop on Epistemic Intelligence in Machine Learning (EIML@ICML 2026), Seoul, South Korea. Copyright 2026 by the author(s).}}
\makeatother

\usepackage{amsmath}
\usepackage{amssymb}
\usepackage{amsthm}

\theoremstyle{plain}
\newtheorem{theorem}{Theorem}[section]
\newtheorem{proposition}[theorem]{Proposition}

\theoremstyle{remark}

\newtheorem{assumption}[theorem]{Assumption}

\begin{document}

\twocolumn[
\icmltitle{Stochastic Sampling is Epistemically Shallow:\\
The Dimensionality Gap Between Temperature Variation\\
and Model Diversity in LLMs}

\icmlsetsymbol{equal}{*}
\begin{icmlauthorlist}
\icmlauthor{Izhar Ali}{rowan}
\end{icmlauthorlist}
\icmlaffiliation{rowan}{Rowan University}
\icmlcorrespondingauthor{Izhar Ali}{aliizh94@rowan.edu}

\vskip 0.3in

\begin{abstract}
When a language model gives different answers on repeated runs,
does that variation reveal what it does not know? Self-consistency
turns the variation into a per-question uncertainty estimate via
majority voting. But does the same variation reveal cross-question
structure---related questions flipping together, the way a diverse
ensemble does? We compare two regimes on the same questions:
one model run $100$ times at $\tau=1$ versus an ensemble of $24$ LLMs run once
each at $\tau=0$. A Marchenko--Pastur random-matrix test separates
signal from sampling noise on both sides. Within any single
model, at most one dimension rises above noise across five
families and three benchmarks (MMLU, HellaSwag, GSM8K). Across the
ensemble, four eigenvalues clear the noise edge, while a
matched-difficulty Bernoulli null produces at most one in $500$
Monte Carlo draws. Self-consistency gives accurate per-question
uncertainty but no detectable cross-question structure; only a
diverse ensemble surfaces what a model does not know.
\end{abstract}
]

\printAffiliationsAndNotice{}

\section{Introduction}

Self-consistency~\citep{selfconsistency2023} has become the default
low-cost uncertainty estimator for LLMs, alongside descendants that read
stochastic variation as a signal of what a model knows:
answer-recurrence confidence~\citep{kadavath2022}, hallucination
flagging by agreement~\citep{selfcheckgpt2023}, semantic
entropy~\citep{semanticentropy2024}. Whether that signal can stand
in for the more expensive alternative of deep
ensembles~\citep{lakshminarayanan2017} depends on what kind of
information it carries.

First, stochastic samples can estimate a \emph{per-question}
success probability $p_i$ by averaging independent noise. Second,
they can reveal \emph{cross-question} structure: correlated errors
across related questions, the way a diverse ensemble
does~\citep{kendall2017uncertainties,huellermeier2021aleatoric}.
To substitute for an ensemble, self-consistency needs the second
kind too.

Cross-model dimensionality is well characterized---a single factor
captures 79\% of variance on 5,000+ models~\citep{metabench2025},
factor analysis yields 8 dimensions on 60
models~\citep{iqtest2025}, low-dimensional joint
embeddings~\citep{jeirt2025}---but these
all analyze a model$\times$benchmark score matrix. Within-model
dimensionality, under a sampling-noise null, has not been
measured. We close this gap with a Marchenko--Pastur
test~\citep{marchenkopastur1967} on the run$\times$question
correctness matrix.

One dimension within, four across: this is the
\emph{dimensionality gap}.

\paragraph{Contributions.}
\begin{enumerate}
\item We define within-model epistemic dimensionality as the
number of independent directions of structured error in one
model's stochastic samples.
\item Empirically, we find at most one above-noise dimension
within each of five LLMs across three benchmarks (MMLU, HellaSwag,
GSM8K chain-of-thought), but four above-noise dimensions across
$24$ diverse models on MMLU---unmatched by any of $500$
matched-difficulty Bernoulli null draws.
\end{enumerate}

\section{Method}

\subsection{Within-Model Stochastic Probing}
\label{sec:method-within}

Let $\mathcal{M}$ be a language model and $\{q_1, \ldots, q_N\}$
be $N$ questions. We generate $K$ stochastic completions per
question at temperature $\tau > 0$. For run $k$ and question $i$,
define
\[
c_{ki} = \begin{cases} 1 & \text{if } \mathcal{M} \text{ answers } q_i \text{ correctly in run } k, \\ 0 & \text{otherwise.} \end{cases}
\]
This gives a $K \times N$ binary matrix $\mathbf{C}$.

A question is \emph{borderline} if its pass rate
$f_i = \frac{1}{K}\sum_k c_{ki}$ lies in
$(\varepsilon, 1-\varepsilon)$ with $\varepsilon=0.05$. Let
$\mathbf{C}_B$ denote $\mathbf{C}$ restricted to the $N_b$
borderline questions; only these contribute non-trivial variance
to the correlation test below.

\subsection{Marchenko--Pastur (MP) Null Test}
\label{sec:method-mp}

We test column independence in $\mathbf{C}_B$ using a
random-matrix null. Compute the $N_b \times N_b$ correlation
matrix $\mathbf{R}$ of $\mathbf{C}_B$ (each borderline question a
variable, each of the $K$ runs an observation) and its eigenvalues
$\lambda_1 \geq \cdots \geq \lambda_{N_b}$. Under the null of
independent Bernoulli columns, $\mathbf{R}$'s spectrum follows
the Marchenko--Pastur law~\citep{marchenkopastur1967} with upper
edge $\lambda_+ = (1 + \sqrt{\gamma})^2$, $\gamma := N_b/K$, and
the standardized top eigenvalue
$z := (\lambda_1 - \lambda_+)/\sigma_{\mathrm{TW}}$ converges to
Tracy--Widom $F_1$ (Prop.~\ref{prop:null}). We use $z$ as the
primary null test and report the count $|\{i:\lambda_i>\lambda_+\}|$
as a coarse summary; cross-checked against Horn's parallel
analysis~\citep{horn1965} (App.~\ref{app:pa}).

Rejecting this null is direct evidence of structured internal
uncertainty: temperature sampling injects independent noise per
prompt, so cross-question covariation requires a coupling
mechanism it does not supply. If each question flips
independently, the correlation matrix is asymptotically pure
noise---no eigenvalue exceeds $\lambda_+$ beyond Tracy--Widom (TW)
fluctuations (Proposition~\ref{prop:null}, App.~\ref{app:null}).

\subsection{Across-Model Comparison}
\label{sec:method-across}

We apply the same MP test to a diverse ensemble of $M$ models,
one run each at $\tau\!=\!0$, with models (rather than runs) as
observations. We restrict to questions where models split
(neither all-correct nor all-incorrect)---the across-model
analogue of the borderline filter from \S\ref{sec:method-within}. The primary statistic is the MP
signal count, calibrated against an independent-Bernoulli null
with per-column rates matching observed per-question pass rates
($500$ Monte Carlo draws; \S\ref{sec:across}). For comparability
with prior factor-analytic
work~\citep{metabench2025,iqtest2025,jeirt2025,factoranalysisllm2025}
we also report the Shannon effective rank $\exp(H)$; it saturates
near the $M\!-\!1$ ceiling under any heterogeneous-rate null and
is a comparability statistic, not an independence test
(App.~\ref{app:shannon}).

\subsection{Experimental Setup}
\label{sec:setup}

\paragraph{Dataset.} $N = 500$ MMLU~\citep{mmlu2021} questions
(test split), stratified across 57 subjects.

\paragraph{Within-model.} Five models, $K = 100$ stochastic
completions per question at $\tau = 1.0$: Qwen2.5-7B-Instruct,
Mistral-7B-Instruct-v0.3, SmolLM2-1.7B-Instruct,
Phi-3-mini-4k-Instruct, Meta-Llama-3-8B-Instruct. Temperature robustness:
$\tau \in \{0.5, 1.5\}$, $K = 30$, $N = 100$ on Qwen2.5-7B.

\paragraph{Across-model.} $N = 500$ MMLU questions, $K = 1$ run
per model at $\tau = 0$, across 24 instruction-tuned models
spanning eleven families and 0.5B--22B: Qwen~(5), Mistral~(4),
Zephyr-7B, Phi~(3), SmolLM2-1.7B, OLMo-2-7B, Yi-1.5-6B,
DeepSeek-7B, Granite-3.1-8B, Llama~(4), Gemma-2~(2).

\section{Results}

\subsection{Within-Model: No Structure Above Noise}
\label{sec:replication}

Within a single model, the run$\times$question correctness
matrix is statistically indistinguishable from independent
Bernoulli draws: at most one eigenvalue reaches the MP edge,
and only within Tracy--Widom sampling noise.
Qwen2.5-7B-Instruct ($71.2\%$ at $\tau\!=\!1.0$; $77.4\%$ greedy)
has $121/500$ borderline questions; the remaining $379$ are
near-deterministic.

\paragraph{MP test.} The top eigenvalue sits below
the MP noise edge ($\lambda_1\!=\!4.17$ vs.\ $\lambda_+\!=\!4.41$
at $\gamma\!=\!1.21$): zero above-noise eigenvalues
(Fig.~\ref{fig:hero}a). Getting a history question
right on a given pass predicts nothing about a chemistry question
on the same pass nor about any same-subject question
(within- and between-subject borderline correlations both
$\approx\!0$). Horn's parallel analysis agrees. The $d_{\text{within}}\!\leq\!1$ ceiling holds across
$\varepsilon\!\in\![0.01,0.20]$ (Fig.~\ref{fig:eps_sweep}).

\paragraph{Replication across five families.} The same pattern
holds on Mistral-7B, SmolLM2-1.7B, Phi-3-mini and
Meta-Llama-3-8B ($K\!=\!100$, $\tau\!=\!1.0$; borderline counts
$121$--$403$); SmolLM2's $\lambda_{\max}\!=\!9.05$ touches
$\lambda_+\!=\!9.04$ but stays within MP sampling noise
(TW $z\!=\!+0.05$; Table~\ref{tab:within_all}). Five families,
$1.7$B--$8$B parameters, none with MP-significant signal.

\subsection{Generalization: HellaSwag and GSM8K}
\label{sec:gsm8k}

The null is not an MMLU artifact. Three models on
HellaSwag~\citep{zellers2019hellaswag} ($N\!=\!200$, $K\!=\!50$)
and Qwen2.5-7B on GSM8K chain-of-thought~\citep{gsm8k2021}
($N\!=\!100$, $K\!=\!30$) also give zero signal eigenvalues
(Table~\ref{tab:within_all}). Within- and between-category
correlations on HellaSwag are near zero ($|r|\!\leq\!0.05$): no
semantic coupling even under the dataset's activity grouping.

\begin{figure*}[t]
\centering
\includegraphics[width=\textwidth]{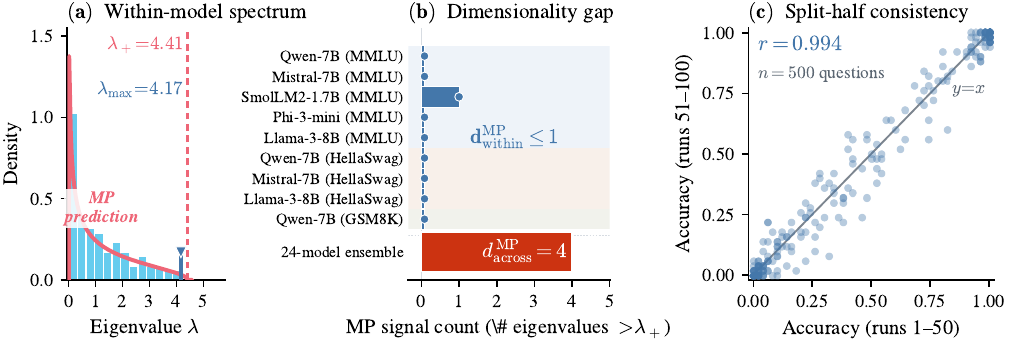}
\caption{\textbf{The dimensionality gap.}
\textbf{(a)}~Eigenvalue spectrum of the correctness correlation
matrix on borderline MMLU questions (Qwen2.5-7B, $K\!=\!100$,
$N_b\!=\!121$). No eigenvalue exceeds the MP noise edge
$\lambda_+ = 4.41$ (red dashed).
\textbf{(b)}~Same-metric comparison: MP signal count is $\leq\!1$
for every one of five families on three tasks but $4$ for the
$24$-model ensemble ($\tau\!=\!0$), unmatched by any of $500$
matched-difficulty independent-Bernoulli null draws.
\textbf{(c)}~Split-half $p_i$ on Qwen2.5-7B ($r = 0.994$):
successive samples are independent draws from a fixed
per-question Bernoulli.}
\label{fig:hero}
\end{figure*}

\begin{table}[t]
\caption{Marchenko--Pastur test across three benchmarks. Margin
$\lambda_+\!-\!\lambda_{\max}$ is non-negative or within MP noise
for every model (SmolLM2 $\leq\!1$ signal dimension). TW $z$:
standard deviations by which $\lambda_{\max}$ exceeds the MP edge
under the null (App.~\ref{app:null}); Monte Carlo mean
$\approx\!-1.60$ at $K\!=\!100$.}
\label{tab:within_all}
\vskip 0.1in
\centering
\begin{small}
\begin{tabular}{@{}lccccc@{}}
\toprule
\textbf{Model} & $N_b$ & $\lambda_+$ & $\lambda_{\max}$ & \textbf{margin} & \textbf{TW $z$} \\
\midrule
\multicolumn{6}{@{}l}{\emph{MMLU} ($K\!=\!100$, $N\!=\!500$)} \\
Qwen2.5-7B        & 121 & 4.41 & 4.17 & $+0.24$ & $-2.01$ \\
Mistral-7B-v0.3   & 283 & 7.19 & 6.85 & $+0.34$ & $-2.37$ \\
SmolLM2-1.7B      & 403 & 9.04 & 9.05 & $-0.01$ & $+0.05$ \\
Phi-3-mini        & 269 & 6.97 & 6.78 & $+0.19$ & $-1.36$ \\
Meta-Llama-3-8B   & 173 & 5.36 & 5.05 & $+0.31$ & $-2.37$ \\
\midrule
\multicolumn{6}{@{}l}{\emph{HellaSwag} ($K\!=\!50$, $N\!=\!200$)} \\
Qwen2.5-7B        &  15 & 2.40 & 2.21 & $+0.18$ & $-1.15$ \\
Mistral-7B-v0.3   &  58 & 4.31 & 4.04 & $+0.28$ & $-1.45$ \\
Meta-Llama-3-8B   &  82 & 5.20 & 4.80 & $+0.40$ & $-1.96$ \\
\midrule
\multicolumn{6}{@{}l}{\emph{GSM8K} ($K\!=\!30$, $N\!=\!100$)} \\
Qwen2.5-7B        &  65 & 6.11 & 5.72 & $+0.39$ & $-1.27$ \\
\bottomrule
\end{tabular}
\end{small}
\end{table}

\subsection{Across-Model: Structured Multi-Dimensional Disagreement}
\label{sec:across}

The 24-model correctness correlation matrix has four eigenvalues
above the MP edge, unmatched by any of $500$ matched-difficulty
independent-Bernoulli draws ($p\!\leq\!1/500$). Model accuracies
span 40.2--77.4\%; on 459/500 questions (91.8\%) at least one
model disagrees. Mean pairwise Pearson $r\!=\!0.35$ (range
$0.01$--$0.63$), higher on same-family pairs (Table~\ref{tab:results};
full distribution in Fig.~\ref{fig:supp}).

\paragraph{MP count (primary).}
$\lambda_{1..4}\!=\!83.0,\,43.1,\,36.5,\,34.6$ against
$\lambda_+\!=\!28.9$. The matched-difficulty null
(\S\ref{sec:method-across}; $500$ draws) produced at most one
signal eigenvalue on every draw (mean $0.028$, max $1$). Even
matched for per-question difficulty, independent Bernoullis
cannot produce the four above-noise directions we observe.

\paragraph{Shannon effective rank (continuity with prior work).}
The same matrix has $\exp(H)\!\approx\!18.7$, below every null
draw ($\mathbb{E}[\exp(H)]\!\approx\!22.3$; $p\!\leq\!1/500$).
But $\exp(H)$ does not discriminate regimes: within- and
across-model values both sit in a ceiling-adjacent band
(Table~\ref{tab:results}; App.~\ref{app:shannon}).

\begin{table}[t]
\caption{The dimensionality gap on 500 MMLU questions.
Within-model columns report ranges across the five MMLU models
(Table~\ref{tab:within_all}). $\lambda_{\max}$: top eigenvalue
of the correctness correlation matrix (both sides). MP signal
count is primary and separates the regimes ($\leq\!1$ vs.\ $4$);
Shannon $\exp(H)$ saturates near its ceiling on both sides
($64$--$87\%$ vs.\ $81\%$).}
\label{tab:results}
\vskip 0.1in
\centering
\begin{small}
\begin{tabular}{lcc}
\toprule
& \textbf{Within-model} & \textbf{Across-model} \\
& (5 models) & (ensemble) \\
\midrule
Variation source & stochastic sampling & model identity ($\tau=0$) \\
Runs / models & $K=100$ & $M=24$ \\
Questions & 500 & 500 \\
Borderline / disagree & 121--403 & 459 \\
Pairwise corr. & $|r|<.005$ & $.01$--$.63$ \\
$\lambda_{\max}$ & $4.17$--$9.05$ & $83.0$ \\
$\lambda_{\max}/\lambda_+$ & $0.94$--$1.00$ & $2.87$ \\
\textbf{MP signal count} & $\mathbf{\leq 1}$ & $\mathbf{4}$ \\
Shannon $\exp(H)$ & $63.5$--$86.0$ & $18.7$ \\
\% of ceiling & $64$--$87\%$ ($K{-}1$) & $81\%$ ($M{-}1$) \\
\bottomrule
\end{tabular}
\end{small}
\end{table}

\subsection{Robustness and Downstream Impact}
\label{sec:robustness}

\paragraph{Temperature robustness.} At $\tau\!=\!1.5$
($K\!=\!30$, $N\!=\!100$, Qwen2.5-7B), $d_{\text{within}}\!=\!0$. At
$\tau\!=\!0.5$ only $3$ questions are borderline---below the MP
test's data floor, so no conclusion is drawn at this temperature.

\paragraph{Per-question estimation.}
Split-half $p_i$ on Qwen2.5-7B correlates at $r\!=\!0.994$
(Fig.~\ref{fig:hero}c), calibration gap below $0.02$: stochastic
sampling captures each scalar $p_i$ precisely but carries no
cross-question information.

\paragraph{Selective prediction.} Two peer models beat $100$-sample
self-consistency at ${\sim}\,1/40$th the cost. The task is
predicting whether Qwen2.5-7B's $\tau\!=\!0$ answer is correct,
with no ground-truth labels at inference. Operational SC$_{100}$
(fraction of the $100$ samples agreeing with the modal answer)
reaches AUROC $0.712$.

Llama-3.1-8B + Gemma-2-9B, each run once at $\tau\!=\!0$, reach
$0.807$ at ${\sim}\,1/40$th the cost (Qwen-7B-equivalent forward
passes; App.~\ref{app:selective}). A single external model
(Llama-3.1-8B) edges past SC$_{100}$ at ${\sim}\,1/88$th the
cost (AUROC $0.749$ vs.\ $0.712$), though this single-peer gap is
marginal at $N\!=\!500$ (DeLong $p\!\approx\!0.2$); the two-peer gap
is highly significant ($p\!<\!0.005$).

\begin{figure}[!t]
\centering
\includegraphics[width=\columnwidth]{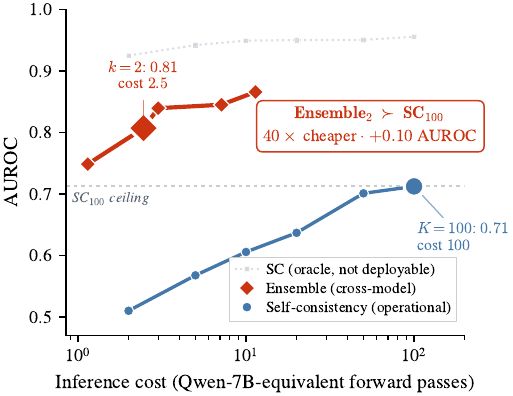}
\caption{\textbf{Cost-AUROC Pareto frontier for selective
prediction on Qwen2.5-7B ($\tau\!=\!0$), 500 MMLU questions.}
Ensemble$_2$ (AUROC $0.81$) beats SC$_{100}$ ($0.71$) at
${\sim}\,1/40$th the compute cost. Grey dotted: SC oracle upper
bound (requires ground-truth labels at inference, not deployable). Ensemble
Pareto-dominates operational SC across the entire cost range.
Cost: Qwen-7B-equivalent forward passes.}
\label{fig:selective}
\end{figure}

\section{Discussion}

\paragraph{Why temperature variation is shallow.}
Temperature is a uniform dilation of the logits: it does not
selectively modulate any knowledge domain. Consistent with this,
within- and between-subject correlations are both ${\approx}\,0$
(\S\ref{sec:replication}), and TW $z$-scores across five models
and three benchmarks are individually consistent with the null
(all $z\!\le\!+0.05$; Prop.~\ref{prop:null},
Table~\ref{tab:within_all}).

\paragraph{Locally committed, globally incoherent.}
When Qwen2.5-7B errs on a borderline question, it picks the same
wrong answer $87.4\%$ of the time across samples---yet these
wrong-answer preferences do not associate across questions
(Appendix~\ref{app:local}). The model has strong local
commitments; it just doesn't chain them into global structure.

\paragraph{Exception: SmolLM2.}
SmolLM2 reaches TW $z\!\approx\!+1.77$ at
$\varepsilon\!=\!0.15$ ($\sim$99th percentile), consistent with
slight residual cross-question coupling that
Assumption~\ref{assn:indep} idealizes away; a candidate mechanism
is chat-template randomness.

\paragraph{Implications for epistemic intelligence.}
Selective-prediction research should budget for diverse models,
not deeper sampling. Stacking more samples from a single model
cannot recover a dimensionality that is not there.

\section{Limitations and Future Work}
\label{sec:limitations}

\begin{itemize}
\setlength\itemsep{1pt}
\item \emph{Binary correctness.} We test for structure in
correctness; richer signals (log-probabilities, semantic clusters)
could carry structure that correctness misses.
\item \emph{MP power.} The test operates at $\gamma\!\in\![0.3,4.0]$;
$K\!\gg\!N_b$ would sharpen detection of structure near the MP edge.
\item \emph{Across-model ceiling.} $d_{\text{across}}$ has not
asymptoted by $M\!=\!24$, so ``four'' is a lower bound on the
true rank.
\item \emph{Post-training.} Instruction-tuned models may differ
from pretrained-only bases.
\end{itemize}

\paragraph{Future work.}
\begin{itemize}
\setlength\itemsep{1pt}
\item \emph{Scope.} Open-generation benchmarks (TruthfulQA,
HaluEval) and a 70B-tier within-model test.
\item \emph{Richer probes.} Log-probability or semantic-cluster
within-model null; prompt perturbation as a second stochastic axis.
\item \emph{Mechanism.} Decompose the same-family SC$_{100}$ win
(\S\ref{app:sel-family}) into disagreement vs.\ calibration;
compare against a multi-seed Pythia ensemble
\citep{lakshminarayanan2017}.
\item \emph{Matched sweep.} Temperature sweep at matched $K$ to
separate sample-count from temperature effects.
\end{itemize}

\section{Conclusion}

Per-question uncertainty lives within a model ($r\!=\!0.994$);
multi-dimensional, cross-question uncertainty lives between them.
Temperature sampling cannot couple errors across questions; model
diversity can---and two peer models outperform $100$-sample
operational self-consistency at $1/40$th the cost
(\S\ref{sec:robustness}). To go beyond a single scalar per
question, ask a different model.

\clearpage

{\footnotesize
\bibliography{references}
\bibliographystyle{icml2026}
}

\clearpage

\appendix
\renewcommand{\thefigure}{A\arabic{figure}}
\renewcommand{\thetable}{A\arabic{table}}
\renewcommand{\theHfigure}{A\arabic{figure}}
\renewcommand{\theHtable}{A\arabic{table}}
\setcounter{figure}{0}
\setcounter{table}{0}
\section{Null Prediction and Test Statistic}
\label{app:null}

This appendix formalizes the asymptotic null behavior of the MP
test from \S\ref{sec:method-mp}.

\begin{assumption}[Per-question independence]
\label{assn:indep}
Assume $c_{ki} \sim \mathrm{Ber}(p_i)$ are mutually independent
across questions $i$ within a run and across runs $k$. Operationally:
independent forward passes with fresh sampling RNG per question and
no shared KV cache, as in the standard eval pipeline used here.
\end{assumption}

\begin{proposition}[Null spectrum and test statistic]
\label{prop:null}
Under Assumption~\ref{assn:indep} with $p_i\in[\varepsilon,1-\varepsilon]$
for all borderline questions: (i)~the population Pearson correlation
matrix of $\mathbf{C}_B$'s columns is $\mathbf{I}_{N_b}$; (ii)~as
$K\to\infty$ with $\gamma$ fixed, $\mathbf{R}$'s empirical spectrum
converges to Marchenko--Pastur on $[\lambda_-,\lambda_+]$ and the
standardized top eigenvalue $z=(\lambda_1-\lambda_+)/\sigma_{\mathrm{TW}}$
converges in distribution to Tracy--Widom $F_1$, with edge scale
$\sigma_{\mathrm{TW}}=(1+\sqrt\gamma)(1+1/\sqrt\gamma)^{1/3}K^{-2/3}$.
$z$ is therefore the appropriate null test; the unbuffered count
$|\{i:\lambda_i>\lambda_+\}|$ has a non-vanishing $O(1)$ limit
(since $\mathbb{P}(\mathrm{TW}_1>0)$ is bounded away from zero) and
is reported as a coarse summary only.
\end{proposition}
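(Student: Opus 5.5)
Part (i) is a direct computation. Under Assumption~\ref{assn:indep}, for $i\neq j$ the entries $c_{ki}$ and $c_{kj}$ are independent, so $\mathrm{Cov}(c_{ki},c_{kj})=0$. Each variance is $p_i(1-p_i)\ge\varepsilon(1-\varepsilon)>0$, so the Pearson correlation is well defined and equals $\delta_{ij}$. The population correlation matrix is therefore $\mathbf{I}_{N_b}$.

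For (ii), first standardize the columns. Set $x_{ki}=(c_{ki}-p_i)/\sqrt{p_i(1-p_i)}$. The entries $x_{ki}$ are then i.i.d.\ across $k$ and independent across $i$, with mean $0$ and variance $1$. They are also uniformly bounded by $\sqrt{(1-\varepsilon)/\varepsilon}$, so every moment is bounded uniformly in $i$, even though they are not identically distributed across columns. The sample covariance $\mathbf{S}=K^{-1}\mathbf{X}^\top\mathbf{X}$ is a Wishart-type matrix with independent, non-identical but uniformly bounded entries. For its bulk spectrum I will invoke the general MP theorem for independent entries under a Lindeberg condition (Bai--Silverstein), which is trivially satisfied here. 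This gives convergence of the ESD to MP with ratio $\gamma$, and with it the edge $\lambda_+=(1+\sqrt\gamma)^2$.

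The empirical $\mathbf{R}$ differs from $\mathbf{S}$ in two ways: the centering uses $f_i$ rather than $p_i$, and each column is rescaled by the empirical standard deviation. I will write $\mathbf{R}=\mathbf{D}^{-1/2}(\mathbf{S}-\bar{\mathbf{x}}\bar{\mathbf{x}}^\top)\mathbf{D}^{-1/2}$. The mean correction $\bar{\mathbf{x}}\bar{\mathbf{x}}^\top$ is rank one, so by interlacing it does not affect the limiting ESD. It shifts the top eigenvalue only downward and, at the edge, by a negligible amount; this is the known result that the TW limit is unchanged for sample correlation matrices. The diagonal $\mathbf{D}$ satisfies $\max_i|D_{ii}-1|=O_P(\sqrt{\log N_b/K})$ by Hoeffding and a union bound, which leaves the bulk unaffected.

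The main obstacle is the edge statement. The crude bound $\|\mathbf{D}-\mathbf{I}\|\,\|\mathbf{S}\|=O(K^{-1/2}\sqrt{\log K})$ is far larger than the TW scale $K^{-2/3}$, so a norm perturbation argument does not suffice. Here I would cite the edge-universality results for sample correlation matrices with independent, sub-Gaussian (here bounded), not-necessarily-identically distributed entries (Bao--Pan--Zhou, and Pillai--Yin for correlation matrices). These results give $K^{2/3}(\lambda_1-\lambda_+)/\big((1+\sqrt\gamma)(1+1/\sqrt\gamma)^{1/3}\big)\Rightarrow F_1$, in the normalization where $\mathbf{R}$ has unit diagonal. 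Matching this normalization to the stated $\sigma_{\mathrm{TW}}$ is a bookkeeping check. If heterogeneous variances are not covered directly, the fallback is to note that after exact standardization the entries have matching first two moments, and to apply a Green-function comparison (four-moment-type argument) against the Gaussian case, where the result is Johnstone's theorem.

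The final remark then follows at once. The event $\{\lambda_1>\lambda_+\}$ is the event $\{z>0\}$, and its probability converges to $1-F_1(0)\in(0,1)$. The count of eigenvalues above $\lambda_+$ therefore has a non-degenerate $O(1)$ limit, which is why it can serve only as a coarse summary, whereas $z$ carries the calibrated null distribution.
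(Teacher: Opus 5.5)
Your route is the paper's: (i) follows from the same covariance computation. For (ii), you recenter at the true $p_i$ (you also rescale) to get independent, bounded, mean-zero entries with column-dependent laws, then quote a Marchenko--Pastur bulk theorem (Bai--Silverstein for you, Pillai--Yin for the paper) and the Bao--Pan--Zhou / Pillai--Yin edge results. Your treatment of the passage from $\mathbf{S}$ to $\mathbf{R}$ is more explicit than the paper's one-line appeal to shift invariance. You also correctly note that a norm perturbation only gives $O(K^{-1/2}\sqrt{\log K})\gg K^{-2/3}$, which is why the edge result must come from a correlation-matrix theorem.

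One caveat applies to both you and the paper. Bao--Pan--Zhou assume symmetric entries and $\gamma<1$, and they handle sample-mean centering only for Gaussian data. That Gaussian result, not Johnstone's covariance theorem, is the benchmark your fallback comparison would need. For standardized Bernoulli entries with $p_i\neq 1/2$, the usable input is Pillai--Yin's Euclidean-norm-normalized matrix $\mathbf{D}_0^{-1/2}\mathbf{S}\mathbf{D}_0^{-1/2}$, with $\mathbf{D}_0=\mathrm{diag}\,\mathbf{S}$. Compare $\mathbf{R}$ with that matrix rather than with $\mathbf{S}$. The two differ only by a diagonal rescaling of size $O(\log N/K)$ and a rank-one subtraction. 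The rank-one term still needs a delocalization bound on the top singular vector against $\mathbf{1}_K/\sqrt K$. Interlacing alone only traps $\lambda_1$ between two consecutive edge eigenvalues, and those are themselves $K^{-2/3}$ apart.

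The genuine omission is the selection step, which the paper's sketch handles explicitly. $\mathbf{C}_B$ consists of columns chosen by the data ($f_i\in(\varepsilon,1-\varepsilon)$), so it is not a fixed collection of independent $\mathrm{Ber}(p_i)$ columns. Your argument silently treats the index set as deterministic. The paper closes this with Hoeffding and a union bound over the $N$ candidates: $\sup_i|f_i-p_i|\to0$, so the empirical borderline set equals $\{i:p_i\in(\varepsilon,1-\varepsilon)\}$ with probability tending to one. On that event your analysis applies unchanged. If you add this step, note that it requires every $p_i$ to sit a fixed distance from $\varepsilon$ and $1-\varepsilon$. With $N_b\asymp K$ columns, Hoeffding localizes $f_i$ only to within $O(\sqrt{\log N/K})$. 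The closed interval in the hypothesis allows $p_i=\varepsilon$, and such a question's inclusion stays random for every $K$.
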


\begin{proof}[Sketch]
Independence gives $\mathrm{Cov}(c_{ki},c_{kj})=0$ for $i\neq j$ and
$p_i(1-p_i)>0$ on borderline questions, so the population Pearson
correlation is $\mathbf{I}_{N_b}$. By Hoeffding with a union bound
over the $N$ candidate questions, $\sup_i |f_i-p_i|\to 0$ a.s.\ as
$K\to\infty$, so the data-driven borderline set
$\{i:f_i\in(\varepsilon,1-\varepsilon)\}$ coincides with its population
counterpart with probability $\to 1$. For (ii), center by $p_i$
(harmless because sample Pearson correlation is shift-invariant) to
obtain mean-zero bounded entries with column-heterogeneous variance
$p_i(1-p_i)$; this fits \citet{pillaiyin2012} (MP bulk; Theorem~1.1
allows column-dependent variances and sub-exponential entries,
extending \citealp{marchenkopastur1967}) and \citet{baopanzhou2012}
(Tracy--Widom edge for sample correlation matrices).
\end{proof}

\section{Supplementary Analyses}
\label{app:supp}

\subsection{Parallel Analysis}
\label{app:pa}

As a complementary test to MP, we apply Horn's parallel
analysis~\citep{horn1965}. We generate 100 surrogate
$K\!\times\!N_b$ binary matrices with independent columns matched
to $\mathbf{C}_B$ in shape and per-question pass rate, and compute
each surrogate's eigenvalues. For each rank $j$, we compare
$\lambda_j(\mathbf{R})$ to the 95th percentile of $\lambda_j$
across surrogates. The reported PA count is the largest $r$ such
that $\lambda_j$ exceeds its surrogate threshold for every
$j\!\le\!r$ (the classical contiguous rule). PA agrees with MP on
every model and benchmark reported in the main text.

\subsection{Robustness to borderline threshold $\varepsilon$}
\label{app:eps}

The main text uses $\varepsilon\!=\!0.05$ to define borderline
questions. Figure~\ref{fig:eps_sweep} sweeps $\varepsilon$ across
$\{0.01, 0.02, 0.05, 0.10, 0.15, 0.20\}$ and confirms the
$d_{\text{within}}\!\leq\!1$ conclusion is not an artifact of this
choice.

\begin{figure}[t]
\centering
\includegraphics[width=\columnwidth]{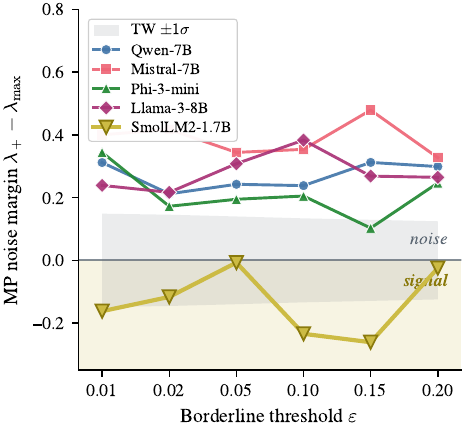}
\caption{\textbf{Robustness to borderline threshold.} MP noise margin
$\lambda_+\!-\!\lambda_{\max}$ plotted against borderline threshold
$\varepsilon \in \{0.01, 0.02, 0.05, 0.10, 0.15, 0.20\}$ for the five
MMLU models ($K\!=\!100$). Positive margin: no signal above the MP
noise edge ($d_{\text{within}}\!=\!0$); negative margin: at least
one above-noise eigenvalue. Grey band: Tracy--Widom $\pm 1\sigma$
(averaged across models per $\varepsilon$), the typical MP-null
fluctuation scale of the margin. Four of five models (Qwen, Mistral,
Phi-3, Llama-3) stay strictly in the noise regime across the full
$\varepsilon$-range. SmolLM2 sits in the signal band throughout
(margin $-0.26$ to $-0.01$) but remains within
${\approx}\,1\,\sigma_{\mathrm{TW}}$ of zero at
$\varepsilon\!\in\!\{0.05,0.20\}$: at most one above-noise
dimension. The claim $d_{\text{within}}\!\leq\!1$ is not sensitive
to the choice of $\varepsilon$.}
\label{fig:eps_sweep}
\end{figure}

\subsection{Shannon Effective Rank: Definition and Extended Analysis}
\label{app:shannon}

\paragraph{Definition.} The Shannon effective rank of the
correctness correlation matrix is
$d_{\text{across}}\!=\!\exp(H)$, where
$H\!=\!-\sum_j\hat\lambda_j\log\hat\lambda_j$ and
$\hat\lambda_j\!=\!\lambda_j/\sum_\ell\lambda_\ell$ are the
normalized eigenvalues.

\paragraph{Ceiling saturation.} Under any
heterogeneous-rate Bernoulli null, $\exp(H)$ saturates near the
$M\!-\!1$ ceiling (the $-1$ comes from column-demeaning:
centered columns lie in the $(M{-}1)$-dimensional hyperplane
orthogonal to $\mathbf{1}_M$). Empirically, the
matched-difficulty null for $M\!=\!24$ gives
$\mathbb{E}[\exp(H)]\!\approx\!22.3$ (min $22.1$). Within-model
values reach $64$--$87\%$ of the $K\!-\!1$ ceiling
(Table~\ref{tab:results}), indistinguishable from the
across-model $81\%$. $\exp(H)$ therefore does not discriminate
the regimes that MP counting separates cleanly.

\section{Selective Prediction: Full Results}
\label{app:selective}

\subsection{Method}
\label{app:sel-method}

\paragraph{Target.} For each of $N\!=\!500$ MMLU questions we predict
whether Qwen2.5-7B-Instruct answers correctly at $\tau\!=\!0$ (greedy
decoding). The base accuracy is $0.774$ ($387/500$ correct), so
the trivial constant predictor achieves AUPR $\approx 0.774$ and
AUROC $0.5$.

\paragraph{Operational SC$_K$ (deployable).} For each $K \in \{2, 5,
10, 20, 50, 100\}$ we draw $K$ stochastic samples from Qwen-7B
($\tau\!=\!1.0$, the same passes used in \S\ref{sec:replication}) and define the
confidence as the fraction of samples agreeing with the modal answer:
$\hat{p}_i = \max_a \frac{1}{K}\sum_{k=1}^{K} \mathbf{1}[a_{ki} = a]$.
This is the standard self-consistency confidence; it does not require
ground-truth labels and is therefore deployable.

\paragraph{Oracle SC$_K$ (not deployable, reported for completeness).}
The oracle confidence is the fraction of the $K$ samples that match
the gold answer: $\hat{p}_i^{\text{oracle}} = \frac{1}{K}\sum_k
c_{ki}$. This is an upper bound on any sample-based confidence and
requires test-time label access; we report it at $K\!=\!2$ and
$K\!=\!100$ only (Table~\ref{tab:sel_full}), to bracket what
self-consistency \emph{could} achieve in principle.

\paragraph{Ensemble$_k$ (deployable).} For $k$ other models
$\{\mathcal{M}_1,\ldots,\mathcal{M}_k\}$ run once at $\tau\!=\!0$, the
confidence is the fraction whose answer independently matches
Qwen-7B's $\tau\!=\!0$ answer:
$\hat{p}_i^{\text{ens}} = \frac{1}{k}\sum_{j=1}^{k}
\mathbf{1}[a_j(q_i) = a_{\text{Qwen}}(q_i)]$. This needs no ground
truth---only Qwen-7B's own greedy answer and the $k$ external models'
greedy answers---so it is deployable today.

\paragraph{Cost.} We report inference cost in
\emph{Qwen-7B-equivalent forward passes}: a single greedy pass of a
$B$-parameter model costs $B/7$ units. SC$_K$ on Qwen-7B costs $K$
units; Ensemble$_k$ costs $\sum_{j=1}^{k} B_j/7$. This isolates the
parameter-FLOPs comparison from sequence-length and serving-stack
effects that vary across deployments.

\paragraph{Selection rules.} \emph{Family-diverse}: descend by
$\tau\!=\!0$ MMLU accuracy, accepting one model per distinct family
(the 11 families enumerated in \S\ref{sec:setup}). \emph{Top-$k$ by accuracy}:
descend by $\tau\!=\!0$ MMLU accuracy without family deduplication.
We use family-diverse selection in the main figure to avoid trivial
wins from stacking near-identical models; \S\ref{app:sel-family}
shows the family-diversity bonus is small but real.

\paragraph{Metrics.} AUROC (rank-correctness of confidence) and AUPR
(area under precision-recall, sensitive to the positive base rate of
$0.774$). We report both but emphasize AUROC because it is invariant
to the positive prevalence and thus directly comparable across
setups with different base accuracies.

\subsection{Full AUROC/AUPR Table}
\label{app:sel-table}

Table~\ref{tab:sel_full} reports every predictor at every budget we
swept, with cost in Qwen-7B-equivalent forward passes. Two patterns
stand out: (i) operational SC$_K$ improves slowly with $K$
($+0.20$ AUROC over a $50\!\times$ cost range) and plateaus well
below the cheapest ensemble---the marginal gain from $K\!=\!50$ to
$K\!=\!100$ is only $+0.011$ AUROC; (ii) Ensemble$_k$
Pareto-dominates operational SC$_K$ at every cost. Top-$k$ selection
slightly beats family-diverse selection at matched $k$ (because it
concentrates parameter mass on the best models); family-diverse
selection costs fewer parameters per $k$ and remains above the SC
frontier.

\begin{table}[t]
\caption{Full selective-prediction results: AUROC and AUPR for
predicting Qwen2.5-7B's $\tau\!=\!0$ correctness on 500 MMLU
questions. Cost is in Qwen-7B-equivalent forward passes
(parameters/$7$\,B). The oracle SC$_K$ rows are upper bounds that
require label access at inference time and are shown for context only.
Operational SC$_K$ uses agreement-with-modal-answer confidence;
Ensemble$_k$ uses agreement with Qwen-7B's $\tau\!=\!0$ answer over
$k$ other models. Ensemble rows are cumulative: rising $k$ adds the
next-highest-$\tau\!=\!0$-MMLU-accuracy model under each selection
rule (family-diverse: next distinct family; top-$k$: next model
overall). The $k\!=\!1$ family-diverse member is Llama-3.1-8B (the
top non-Qwen model).}
\label{tab:sel_full}
\vskip 0.1in
\centering
\begin{small}
\begin{tabular}{llccc}
\toprule
\textbf{Predictor} & $K$/$k$ & \textbf{AUROC} & \textbf{AUPR} & \textbf{Cost} \\
\midrule
\multicolumn{5}{l}{\emph{Self-consistency, operational}} \\
SC$_K$ & 2   & 0.510 & 0.778 & 2.00 \\
SC$_K$ & 5   & 0.568 & 0.801 & 5.00 \\
SC$_K$ & 10  & 0.606 & 0.818 & 10.00 \\
SC$_K$ & 20  & 0.637 & 0.834 & 20.00 \\
SC$_K$ & 50  & 0.701 & 0.870 & 50.00 \\
SC$_K$ & 100 & 0.712 & 0.880 & 100.00 \\
\midrule
\multicolumn{5}{l}{\emph{Self-consistency, oracle upper bound}} \\
SC$_K$ & 2   & 0.925 & 0.964 & 2.00 \\
SC$_K$ & 100 & 0.956 & 0.986 & 100.00 \\
\midrule
\multicolumn{5}{l}{\emph{Ensemble$_k$, family-diverse}} \\
Ens. & 1  & 0.749 & 0.874 & 1.14 \\
Ens. & 2  & 0.807 & 0.905 & 2.46 \\
Ens. & 3  & 0.839 & 0.924 & 3.00 \\
Ens. & 5  & 0.845 & 0.930 & 7.14 \\
Ens. & 10 & 0.866 & 0.948 & 11.39 \\
\midrule
\multicolumn{5}{l}{\emph{Ensemble$_k$, top-$k$ by accuracy}} \\
Ens. & 5  & 0.880 & 0.946 & 4.43 \\
Ens. & 10 & 0.888 & 0.957 & 10.30 \\
\bottomrule
\end{tabular}
\end{small}
\end{table}

\subsection{Family-Diversity Ablation}
\label{app:sel-family}

Does ensemble gain require diverse architectures, or is any
collection of three independent runs enough? We compare two $k\!=\!3$
ensembles selected from the 24-model pool:

\begin{itemize}
\item \emph{Same-family} (Qwen2.5-0.5B, 1.5B, 3B): AUROC $0.811$,
AUPR $0.909$, cost $0.71$ Qwen-7B units.
\item \emph{Different-family} (Llama-3.1-8B, Gemma-2-9B,
Phi-3-mini): AUROC $0.839$, AUPR $0.924$, cost $3.00$ Qwen-7B units.
\end{itemize}

Family diversity contributes $+0.028$ AUROC---real but modest, on
top of a much larger gap between either ensemble and operational
SC$_{100}$ ($0.712$). Even three same-family small Qwens, at $1/140$
of SC$_{100}$'s cost, beat $K\!=\!100$ operational SC by
$+0.099$ AUROC. This is the strongest available reading of the
dimensionality gap as a deployment claim: the cross-model signal is
so much richer than the within-model signal that even
within-\emph{family}-but-across-\emph{scale} disagreement clears
the SC$_{100}$ bar by a wide margin.

\section{Strong Local Preferences, Zero Global Coherence}
\label{app:local}

A question is \emph{borderline} here as in \S\ref{sec:method-within}:
its pass rate over the $K\!=\!100$ samples lies in
$(0.05, 0.95)$. When Qwen2.5-7B errs on such a question, it
picks the same wrong choice $87.4\%$ of the time (mean across
borderline questions, $K\!=\!100$). Yet these preferences are independent
across questions: Cram\'{e}r's V and mutual-information permutation
tests find no pairwise associations above the 99th percentile of
the null, across all five models
($1{,}242$ borderline questions, $200$ permutations per model).
Locally committed, globally incoherent.

\section{Pairwise Model Correlations}
\label{app:pairwise}

Fig.~\ref{fig:supp} expands the across-model pairwise-correlation
summary in \S\ref{sec:across} (mean $r\!=\!0.35$, range
$0.01$--$0.63$) to the full distribution over all
$\binom{24}{2}\!=\!276$ model pairs, split by same-family vs.\
cross-family.

\begin{figure}[t]
\centering
\includegraphics[width=\columnwidth]{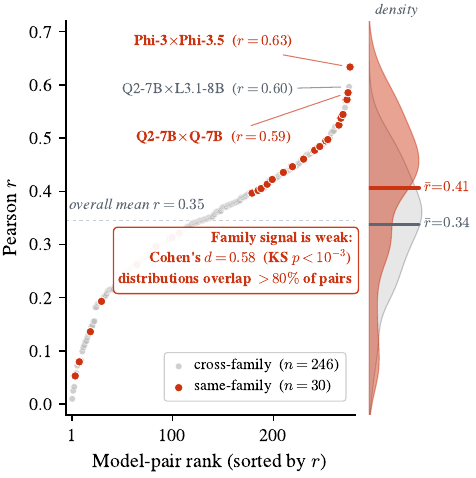}
\caption{\textbf{Pairwise model correlations.} All
$\binom{24}{2}\!=\!276$ model-pairs on MMLU ($\tau\!=\!0$), sorted
by Pearson $r$ and split by same-family (red, $n\!=\!30$) vs.\
cross-family (grey, $n\!=\!246$). Side panel: kernel density
with mean bars $\bar r\!=\!0.41$ (same) and $0.34$ (cross).
Same-family pairs trend higher (Cohen's $d\!=\!0.58$, KS
$p\!<\!10^{-3}$), but the same-family and cross-family densities
overlap substantially. The top-3 pairs by $r$ are annotated with
their model names.}
\label{fig:supp}
\end{figure}

\section{Extended Related Work}
\label{app:related}

\paragraph{Across-model dimensionality.}
Cross-model competence dimensionality has been studied via 1D
item response theory (IRT), with a single factor capturing $79\%$
of variance on $5{,}000$+ models~\citep{metabench2025}; 8-factor
analysis on 60 models~\citep{iqtest2025}; low-dimensional joint
embeddings~\citep{jeirt2025}; and 18 cognitive
rubrics~\citep{adele2026}. We complement these with the first
structural measurement of \emph{within-model} dimensionality across
stochastic samples.

\paragraph{Self-consistency structure.}
\citet{hamidieh2025} showed empirically that cross-model
disagreement captures uncertainty self-consistency misses; we find
the analogous gap structurally in the correctness covariance.
\citet{correlatederrors2025} studied pairwise error correlations
across 350+ models; we measure the full factor structure.
\citet{factoranalysisllm2025} applied factor
analysis to within-question answer-choice variation, not the
cross-question covariance we analyze. \citet{mctemp2025} proposed
Monte Carlo temperature; none tested here produces above-noise
structure. MP is standard in portfolio theory; its application to
within-model LLM uncertainty is, to our knowledge, novel in this
context.

\end{document}